\newcommand{\butterfly}{\mathbf{B}}
\newcommand{\givens}{\mathbf{G}}
\newcommand{\real}{\mathbb{R}}
\let\oldding\ding
\renewcommand{\ding}[2][1]{\scalebox{#1}{\oldding{#2}}}
\theoremstyle{plain}
\newtheorem{theorem}{Theorem}[section]
\theoremstyle{definition}
\theoremstyle{remark}
\newcommand{\syz}[1]{\textcolor{blue}{SYZ:}}
\icmltitlerunning{ButterflyQuant: Ultra-low-bit LLM Quantization through Learnable Orthogonal Butterfly Transforms}
\begin{document}

\twocolumn[
\icmltitle{ButterflyQuant: Ultra-low-bit LLM Quantization through Learnable Orthogonal Butterfly Transforms}



\icmlsetsymbol{equal}{*}

\begin{icmlauthorlist}
\icmlauthor{Bingxin Xu}{aaa}
\icmlauthor{Zhen Dong}{bbb}
\icmlauthor{Oussama Elachqar}{ccc}
\icmlauthor{Yuzhang Shang}{ddd}
\end{icmlauthorlist}

\icmlaffiliation{aaa}{USC}
\icmlaffiliation{bbb}{UCSB}
\icmlaffiliation{ccc}{Omni}
\icmlaffiliation{ddd}{UCF}

\icmlcorrespondingauthor{Yuzhang Shang}{yuzhang.shang@ucf.edu}

\icmlkeywords{Machine Learning, ICML}

\vskip 0.3in
]



\printAffiliationsAndNotice{}  


\begin{abstract}
Large language models require massive memory footprints, severely limiting deployment on consumer hardware. 
Quantization reduces memory through lower numerical precision, but extreme 2-bit quantization suffers from catastrophic performance loss due to outliers in activations.
Rotation-based methods such as QuIP and QuaRot apply orthogonal transforms to eliminate outliers before quantization, using computational invariance: $\mathbf{y} = \mathbf{Wx} = (\mathbf{WQ}^T)(\mathbf{Qx})$ for orthogonal $\mathbf{Q}$.
However, these methods use fixed transforms--Hadamard matrices achieving optimal worst-case coherence $\mu = 1/\sqrt{n}$--that cannot adapt to specific weight distributions.
We identify that different transformer layers exhibit distinct outlier patterns, motivating layer-adaptive rotations rather than one-size-fits-all approaches.
In this work, we propose ButterflyQuant, which replaces Hadamard rotations with learnable butterfly transforms parameterized by continuous Givens rotation angles.
Unlike Hadamard's discrete $\{+1, -1\}$ entries that are non-differentiable and thus prohibit gradient-based learning, butterfly transforms' continuous parameterization enables smooth optimization while guaranteeing orthogonality by construction.
This orthogonal constraint ensures theoretical guarantees in outlier suppression while achieving $O(n \log n)$ computational complexity with only $\frac{n \log n}{2}$ learnable parameters.
We further introduce a uniformity regularization on post-transformation activations to promote smoother distributions amenable to quantization.
Learning requires only 128 calibration samples and converges in minutes on a single GPU.
\end{abstract}
\section{Introduction}

Large language models (LLMs) have demonstrated remarkable capabilities, but their deployment remains severely constrained by memory requirements. LLaMA-70B requires 140GB in FP16 precision, exceeding the capacity of most GPUs and making consumer deployment infeasible \citep{touvron2023llama2,zhao2025benchmarking}. Recent research highlights deployment challenges including memory bandwidth bottlenecks, with inference serving as the dominant cost in production systems \citep{chen2025efficientqat,yuan2024llm}. Quantization---reducing numerical precision to 2-4 bits---offers a direct solution by compressing LLMs 4-8$\times$. However, extreme quantization suffers from catastrophic performance degradation due to outliers in activations that dominate the dynamic range, a primary obstacle to low-bit compression~\citep{wei2022outlier,sun2024massive}.

\begin{figure*}
\centering
\includegraphics[width=0.99\textwidth]{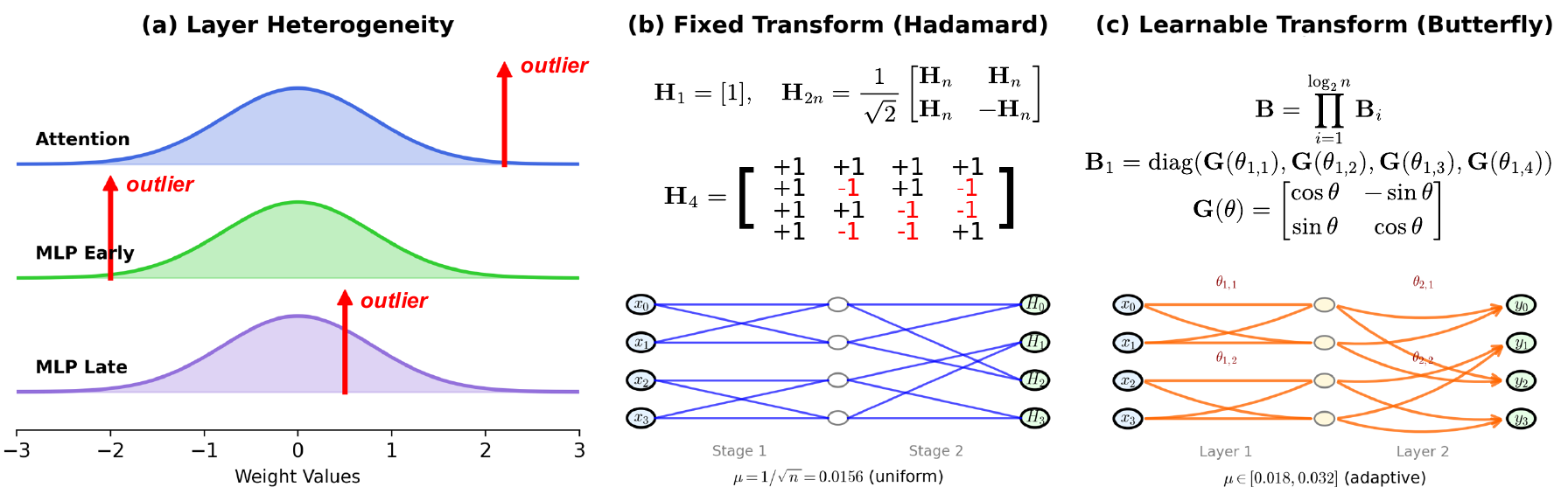}
\vspace{-0.1in}
\caption{\small \textbf{Layer heterogeneity motivates learnable transforms for LLM quantization.} \textbf{(a)} Different transformer layers exhibit distinct outlier distributions: attention (positive tails), early MLP (negative regions), late MLP (boundaries). \textbf{(b)} Hadamard transforms with discrete $\{+1,-1\}$ entries apply fixed rotations through recursive decomposition $\mathbf{H}_{2n} = \frac{1}{\sqrt{2}}[\mathbf{H}_n, \mathbf{H}_n; \mathbf{H}_n, -\mathbf{H}_n]$, achieving uniform coherence $\mu = 1/\sqrt{n} = 0.0156$ across all layers. \textbf{(c)} Butterfly transforms use continuous rotation angles $\theta_{i,j}$ in Givens rotations $\mathbf{G}(\theta)$, enabling gradient-based optimization to learn layer-specific patterns. This yields adaptive coherence that matches each layer's outlier distribution.}
\vspace{-0.2in}
\label{fig:teaser}
\end{figure*}

To mitigate the outlier problem, rotation-based quantization methods have emerged as a robust solution \citep{ashkboos2024quarot,chee2023quip}. These methods apply an orthogonal transformation $\mathbf{Q}$ before quantization, leveraging computational invariance: $\mathbf{y} = \mathbf{Wx} = (\mathbf{WQ}^T)(\mathbf{Qx})$. The rotation redistributes activations across channels, effectively smoothing out outlier features without altering the layer's output. Prominent methods like QuaRot \citep{ashkboos2024quarot} use fixed Hadamard transforms, which achieve optimal worst-case coherence, while QuIP \citep{chee2023quip} employs random orthogonal matrices. Despite their success, these approaches share a critical limitation: they apply a \emph{fixed}, data-agnostic transform with discrete $\{+1, -1\}$ entries that cannot be optimized via gradients to all layers.

This one-size-fits-all strategy is fundamentally misaligned with the nature of LLMs, which exhibit significant \emph{heterogeneity} across layers \citep{sun2024massive}. As illustrated in Figure~\ref{fig:teaser}(a), different transformer layers present unique quantization challenges: attention layers develop outliers in positive tails \citep{bondarenko2023quantizable}, early MLP layers show them in negative regions \citep{wei2022outlier}, and late MLP layers have them near distribution boundaries \citep{sun2024massive}. 
These distinct patterns arise from varied architectural roles. Attention's softmax operation naturally produces positive-skewed distributions \citep{bondarenko2023quantizable}, early MLP gating functions (e.g., SwiGLU \citep{shazeer2020glu}) create asymmetric negative activations, while deeper layers accumulate numerical artifacts at distribution extremes \citep{sun2024massive,dettmers2022gpt3}. This heterogeneity means that a single fixed rotation cannot be optimal for all layers. This layer-specific structure reveals a missed opportunity for optimization, motivating a shift from fixed to adaptive rotations.

To address this need for an adaptive yet efficient orthogonal transform, we propose \textbf{ButterflyQuant}. Our method replaces fixed Hadamard rotations with \emph{learnable} butterfly transforms--structured orthogonal matrices factorized into $O(n \log n)$ Givens rotations \citep{dao2019learning}. As shown in Figure~\ref{fig:teaser}(c), butterfly transforms maintain the efficient computational structure of Hadamard matrices but learn layer-specific rotation angles via gradient descent. Crucially, butterfly transforms parameterize rotations through continuous angles $\theta \in \mathbb{R}$, enabling smooth gradient flow and stable optimization---in contrast to Hadamard's discrete $\{+1, -1\}$ entries that prohibit gradient-based learning. This allows them to develop adaptive coherence patterns that are tailored to each layer's unique outlier distribution. 
Unlike other learnable methods (e.g., SpinQuant \citep{liu2024spinquant}) that optimize over the full Stiefel manifold with high computational cost, our sparse parameterization guarantees orthogonality by construction, enabling stable and efficient optimization. For non-power-of-2 dimensions common in LLMs (e.g., 5120), we develop composite transforms using Kronecker products. The learning process is remarkably lightweight, converging in minutes on a single GPU with a small calibration set.

Our contributions are: (1) We introduce learnable butterfly transforms for LLM quantization, maintaining orthogonality guarantees while adapting to model- and layer-specific weight distributions. (2) We prove that butterfly transforms can represent Hadamard matrices exactly, making them strictly more expressive than fixed schemes. (3) We achieve state-of-the-art 2-bit quantization results among rotation-based methods. (4) We show that the learned butterfly transform preserves the same $O(n \log n)$ butterfly computational graph as the fast Hadamard transform, enabling straightforward adaptation of existing optimized Hadamard CUDA kernels with expected minimal inference overhead.

\section{Related Work}
\subsection{Post-Training Quantization for LLMs}
Recent surveys \citep{zhao2025benchmarking} categorize LLM PTQ methods into four main approaches: compensation, rotation, salience, and optimization-based techniques.

\textbf{Compensation and Salience Methods.} GPTQ \citep{frantar2023gptq} uses second-order Hessian information for reconstruction error minimization. AWQ \citep{lin2023awq} preserves salient weights while quantizing others based on activation magnitudes. SmoothQuant \citep{xiao2023smoothquant} migrates difficulty from activations to weights via channel-wise scaling. OmniQuant \citep{shao2023omniquant} and AffineQuant \citep{ma2024affinequant} apply learnable transformations for outlier suppression. Mixed-precision methods like LLM.int8() \citep{dettmers2022gpt3} and SpQR \citep{dettmers2023spqr} maintain outlier channels in higher precision but increase complexity.

\textbf{Optimization Methods.} Recent approaches include VPTQ \citep{vptq2024} using vector quantization with optimized codebooks, APTQ \citep{aptq2024} with adaptive precision allocation, and AQLM \citep{egiazarian2024extreme} employing additive quantization for 2-bit compression. While achieving strong results, these methods require extensive optimization and lack theoretical guarantees of rotation-based approaches.

\subsection{Rotation-Based Quantization}

Rotation-based methods eliminate outliers through orthogonal transformations leveraging computational invariance: for orthogonal $\mathbf{Q}$, $\mathbf{y} = \mathbf{Wx} = (\mathbf{WQ}^T)(\mathbf{Qx})$. 

\textbf{Fixed Methods.} QuaRot \citep{ashkboos2024quarot} applies Hadamard transforms to redistribute outliers. QuIP \citep{chee2023quip} and QuIP\# \citep{quipsharp2024} use random orthogonal matrices based on the \emph{incoherence principle}, showing that maximizing incoherence minimizes worst-case quantization error. However, these predetermined rotations cannot adapt to specific models.

\textbf{Learned Methods.} SpinQuant \citep{liu2024spinquant} optimizes full $n \times n$ rotation matrices on the Stiefel manifold, requiring $O(n^2)$ parameters (16.7M for 4096-dim layers) and $O(n^3)$ operations. ROSAQ \citep{yoon2025rosaq} and KurTail \citep{akhondzadeh2025kurtail} learn saliency-aware and kurtosis-guided rotations respectively, but lack theoretical guarantees and still require $O(n^2)$ space. DuQuant~\cite{lin2024duquant} employs a greedy construction method that uses identified outlier dimensions as prior knowledge to build rotation matrices, followed by zigzag permutation to balance outlier distribution across blocks. 

\textbf{Our Position.} ButterflyQuant bridges fixed and learned approaches through structured parameterization. Unlike QuIP\#'s fixed rotations or SpinQuant's expensive Stiefel optimization, our Givens parameterization guarantees orthogonality by construction while enabling efficient data-driven learning, combining theoretical guarantees with adaptability. Crucially, the butterfly computational graph is structurally identical to the fast Hadamard transform, enabling adaptation of existing optimized inference kernels with minimal modifications.

\subsection{Structured Transforms and Orthogonal Parameterization}

Butterfly transforms \citep{cooley1965algorithm,dao2019learning} factorize dense matrices into $O(\log n)$ sparse layers with $O(n)$ parameters, successfully applied to attention \citep{dao2022flashattention}, state space models \citep{gu2021efficiently}, and neural architectures \citep{poli2023hyena,vahid2020butterfly}. Orthogonal parameterizations include Cayley transforms \citep{helfrich2018orthogonal} requiring matrix inversion, Householder reflections \citep{mhammedi2017efficient} with sequential dependencies, and Givens rotations \citep{givens1958computation,lezcano2019cheap} offering stable local updates. \citet{liu2024parameter} show butterfly-Givens achieve 10,000$\times$ parameter reduction in fine-tuning.

Despite extensive use in deep learning, structured transforms remain unexplored for quantization, which requires: (1) strict orthogonality for computational invariance $\mathbf{y} = \mathbf{Wx} = (\mathbf{WQ}^T)(\mathbf{Qx})$, (2) adaptability to layer-specific outliers \citep{sun2024massive}, and (3) efficient inference. While FlatQuant \citep{sun2024flatquant} uses Kronecker decomposition, it sacrifices orthogonality. ButterflyQuant uniquely combines continuous parameterization ($\theta \in \mathbb{R}$ enabling gradients vs. Hadamard's discrete $\{+1,-1\}$) and $O(n \log n)$ complexity, balancing expressiveness with efficiency. ButterflyQuant is the pioneer in leveraging these properties for LLM quantization.

\section{Method}

\subsection{Preliminaries: Rotation-Based Quantization and Incoherence}

Given a weight matrix $\mathbf{W} \in \real^{m \times n}$ and activation $\mathbf{x} \in \real^{n}$, standard quantization directly quantizes:
\begin{equation}
\mathbf{y} = \mathbf{W}\mathbf{x} \approx \text{Q}(\mathbf{W}) \cdot (\mathbf{x})
\end{equation}
where $\text{Q}(\cdot)$ denotes the quantization operation. This approach suffers from outlier features that dominate the dynamic range, causing severe accuracy degradation in extreme quantization settings.

\subsubsection{Theoretical Foundation: The Incoherence Principle}

The key insight from QuIP \citep{chee2023quip} is that quantization error is minimized when the rotation basis is maximally \emph{incoherent} with the standard basis. For an orthogonal matrix $\mathbf{Q} \in \mathbb{R}^{n \times n}$, the mutual coherence is defined as:
\begin{equation}
\mu(\mathbf{Q}) = \max_{i \neq j} |Q_{ij}|
\end{equation}

Lower coherence implies that information is more evenly distributed across all dimensions, preventing any single entry from dominating. QuIP proves that random orthogonal matrices achieve near-optimal incoherence with high probability, with $\mu(\mathbf{Q}) = O(\sqrt{\log n / n})$. This theoretical foundation explains why rotation-based methods outperform direct quantization: they transform weights and activations into a basis where values are more uniformly distributed.

\subsubsection{Fixed Hadamard Transforms}

QuaRot \citep{ashkboos2024quarot} applies Hadamard transforms as a computationally efficient approximation to random rotations. The Hadamard matrix $\mathbf{H}_n$ of dimension $n \times n$ is recursively defined as:
\begin{equation}
\mathbf{H}_1 = [1], \quad \mathbf{H}_{2n} = \frac{1}{\sqrt{2}} \begin{bmatrix} \mathbf{H}_n & \mathbf{H}_n \\ \mathbf{H}_n & -\mathbf{H}_n \end{bmatrix}
\end{equation}

For a weight matrix $\mathbf{W}$ and activation $\mathbf{x}$, QuaRot applies the transformation to obtain rotated versions:
\begin{equation}
\mathbf{W}' = \mathbf{W} \mathbf{H}^T, \quad \mathbf{x}' = \mathbf{H} \mathbf{x}
\end{equation}

This transformation leverages computational invariance---the property that the output remains unchanged while transforming both weights and activations:
\begin{equation}
\mathbf{y} = \mathbf{W}\mathbf{x} = (\mathbf{W}\mathbf{H}^T)(\mathbf{H}\mathbf{x}) = \mathbf{W}' \mathbf{x}'
\end{equation}

Hadamard matrices achieve coherence $\mu(\mathbf{H}_n) = 1/\sqrt{n}$, attaining the Welch bound---the theoretical minimum coherence achievable by any $n \times n$ orthogonal matrix. This theoretical elegance has made them the de facto choice for rotation-based quantization, and indeed, they deliver consistent improvements across diverse architectures. However, a fundamental limitation is that Hadamard matrices consist of discrete $\{\pm 1\}$ entries (with overall normalization $1/\sqrt{n}$), making them impossible to optimize via gradient descent. This discrete nature forces a one-size-fits-all approach that cannot adapt to the heterogeneous outlier patterns observed across transformer layers.

\subsubsection{From Fixed to Learnable Continuous Rotations}

Although Hadamard transforms achieve optimal worst-case incoherence, they suffer from two critical limitations: (1) Their discrete $\{+1, -1\}$ entries prohibit gradient-based optimization, and (2) neural networks exhibit layer-specific structured patterns \citep{sun2024massive}. As shown in Figure~\ref{fig:teaser}(a), attention layers, early MLPs, and late MLPs each have distinct outlier distributions that require tailored rotations. Fixed transforms cannot adapt to these heterogeneous patterns, treating all layers identically despite their vastly different quantization challenges.

We need rotations that maintain incoherence guarantees while adapting to specific layer patterns. \textit{Butterfly transforms overcome these limitations through continuous parameterization}: they use learnable angles $\theta \in \mathbb{R}$ that enable smooth gradient flow, allowing adaptation to each layer's specific outlier pattern while maintaining orthogonality guarantees. They factorize into $O(n \log n)$ Givens rotations, can represent Hadamard matrices exactly, and enable gradient-based optimization through their continuous parameterization.

\subsection{Butterfly Transforms: Bridging Fixed and Learnable Rotations}

We propose replacing fixed Hadamard rotations with learnable butterfly transforms to address the layer heterogeneity challenge. Butterfly transforms bridge rotation-based and optimization-based approaches: they maintain orthogonality guarantees while using continuous parameterization to learn layer-specific rotations that match the distinct outlier patterns of attention, early MLP, and late MLP layers. This adaptability is crucial for extreme quantization where different layers face fundamentally different challenges.

\subsubsection{Structure and Parameterization}

Let $\mathbf{B} \in \mathbb{R}^{n \times n}$ denote a butterfly transform matrix. It factorizes into $\log_2 n$ layers of sparse orthogonal matrices:
\begin{equation}
\mathbf{B} = \prod_{i=1}^{\log_2 n} \mathbf{B}_i
\end{equation}

Each layer $\mathbf{B}_i$ consists of $n/2$ independent $2 \times 2$ Givens rotations, where a Givens rotation is defined as:
\begin{equation}
\mathbf{G}(\theta) = \begin{bmatrix} \cos\theta & -\sin\theta \\ \sin\theta & \cos\theta \end{bmatrix}
\end{equation}

The layer structure is:
\begin{equation}
\mathbf{B}_i = \mathbf{P}_i \cdot \text{diag}(\mathbf{G}(\theta_{i,1}), \mathbf{G}(\theta_{i,2}), \ldots, \mathbf{G}(\theta_{i,n/2})) \cdot \mathbf{P}_i^T
\end{equation}

where $\mathbf{P}_i$ is a permutation matrix that defines the butterfly connectivity pattern at layer $i$, pairing indices with stride $2^{i-1}$. Specifically, layer 1 pairs adjacent indices (0,1), (2,3), ..., layer 2 pairs with stride 2: (0,2), (1,3), ..., and so on.

Unlike Hadamard's discrete entries, these continuous angles $\theta \in \mathbb{R}$ can be optimized through gradient descent, enabling the transform to adapt to layer-specific patterns identified during calibration. The butterfly structure creates a sparse, hierarchical factorization. For example, in 8 dimensions with 3 layers, each layer applies rotations to different index pairs:
\begin{equation}
\mathbf{B}_1 = \text{diag}(\mathbf{G}(\theta_{1,1}), \mathbf{G}(\theta_{1,2}), \mathbf{G}(\theta_{1,3}), \mathbf{G}(\theta_{1,4}))
\end{equation}
where the first layer pairs adjacent indices, the second layer pairs with stride 2, and the third with stride 4, creating the characteristic ``butterfly'' crossing pattern.

The complete transform achieves remarkable sparsity:
\begin{itemize}
\item \textbf{Parameters}: Only $\frac{n \log_2 n}{2}$ rotation angles (vs. $\frac{n(n-1)}{2}$ for full orthogonal)
\item \textbf{Complexity}: $O(n \log n)$ operations (vs. $O(n^2)$ for dense matrices)
\item \textbf{Sparsity}: $\frac{2\log_2 n}{n}$ non-zero ratio (e.g., 93.75\% sparse for $n=128$)
\end{itemize}

This parameterization ensures orthogonality by construction while enabling gradient-based optimization. See Appendix~\ref{app:butterfly_details} for detailed matrix structures and visualizations.

\subsubsection{Relationship to Hadamard Transforms}

\begin{theorem}
The Hadamard matrix $\mathbf{H}_n$ for $n = 2^k$ can be exactly represented as a butterfly transform with specific parameter choices \citep{dao2019learning}.
\end{theorem}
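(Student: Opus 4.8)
The plan is to prove the identity by induction on $k$, where $n = 2^k$, matching the recursive two-way block structure of $\mathbf{H}_n$ against that of a butterfly transform. The parameter choice I would use is the uniform one, $\theta_{i,j} = \pi/4$ for every layer $i$ and every block $j$, so that each $2\times2$ factor becomes $\mathbf{G}(\pi/4) = \tfrac{1}{\sqrt2}\left(\begin{smallmatrix}1 & -1\\ 1 & 1\end{smallmatrix}\right)$. With this choice I claim $\mathbf{B} = \mathbf{H}_n$ up to a fixed coordinate permutation depending only on $n$ (and, at the bottom level, one reflection), both of which the butterfly parameterization absorbs; the phrase ``with specific parameter choices'' in the statement refers precisely to this uniform angle together with the accompanying permutation convention.

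For the base case $n=2$ I would note that $\mathbf{G}(\pi/4)$ equals $\mathbf{H}_2$ after transposing its two columns, i.e. $\mathbf{H}_2 = \mathbf{S}\,\mathbf{G}(\pi/4)$ with $\mathbf{S} = \left(\begin{smallmatrix}0&1\\1&0\end{smallmatrix}\right)$; this transposition is genuinely forced if one insists on pure rotations, since $\det\mathbf{H}_2 = -1$ while $\det\mathbf{G}(\theta) = +1$, and it disappears if each $2\times2$ factor is allowed to be an arbitrary orthogonal $2\times2$ matrix, as in the general butterfly of \citet{dao2019learning}. For the inductive step, with all angles set to $\pi/4$, I would first observe that the first $\log_2 n - 1$ layers have strides $2^{i-1}\le n/4$, so every index pair they act on lies entirely inside $\{0,\dots,n/2-1\}$ or inside $\{n/2,\dots,n-1\}$; hence their product is block-diagonal, and on each $(n/2)$-dimensional block it is an $(n/2)$-dimensional butterfly with all angles $\pi/4$, which the induction hypothesis identifies with $\mathbf{H}_{n/2}$ (up to the inherited permutation). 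The last layer, of stride $n/2$, pairs $j$ with $j+n/2$, so in the original coordinates it is $\tfrac{1}{\sqrt2}\left(\begin{smallmatrix}\mathbf{I}&-\mathbf{I}\\ \mathbf{I}&\mathbf{I}\end{smallmatrix}\right)$; multiplying gives $\mathbf{B} = \tfrac{1}{\sqrt2}\left(\begin{smallmatrix}\mathbf{H}_{n/2}&-\mathbf{H}_{n/2}\\ \mathbf{H}_{n/2}&\mathbf{H}_{n/2}\end{smallmatrix}\right)$, which is $\mathbf{H}_n = \tfrac{1}{\sqrt2}\left(\begin{smallmatrix}\mathbf{H}_{n/2}&\mathbf{H}_{n/2}\\ \mathbf{H}_{n/2}&-\mathbf{H}_{n/2}\end{smallmatrix}\right)$ after swapping its two block-rows. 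The normalization needs no separate argument: each Givens block has entries of magnitude $1/\sqrt2$, so the $\log_2 n$-fold product has entries of magnitude $(1/\sqrt2)^{\log_2 n} = 1/\sqrt n$, matching $\mathbf{H}_n$.

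The hard part will be the sign-and-permutation bookkeeping: lining up the stride-$2^{i-1}$ butterfly connectivity with the placement of the ``$-$'' block in the Hadamard recursion and with the $-\sin\theta$ entry of $\mathbf{G}(\theta)$, tracking how the per-level half-swap permutations compose into a single fixed permutation, and deciding whether to state the result ``up to a fixed signed permutation'' or to fold that permutation into the layer permutations $\mathbf{P}_i$. Everything else is routine. A cleaner route that bypasses the bookkeeping is to invoke the observation that the Walsh--Hadamard transform is exactly the radix-$2$ Cooley--Tukey butterfly with all twiddle factors equal to $1$ (the Hadamard recursion \emph{is} that split step), so $\mathbf{H}_n$ sits inside the butterfly family for the same reason the DFT and DCT do in \citet{dao2019learning}; the induction above is simply the self-contained version of that statement.
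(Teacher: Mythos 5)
Your proposal is correct and follows essentially the same route as the paper's proof sketch: peel off the Hadamard recursion $\mathbf{H}_n = \bigl[\begin{smallmatrix}\mathbf{I}&\mathbf{I}\\ \mathbf{I}&-\mathbf{I}\end{smallmatrix}\bigr]\,\mathrm{diag}(\mathbf{H}_{n/2},\mathbf{H}_{n/2})$ layer by layer, identify each stage with a butterfly layer, and reduce to the base case $\mathbf{H}_2 = \mathbf{G}(\pi/4)$ up to a permutation. Your version is actually the more careful one---the determinant obstruction ($\det\mathbf{H}_2=-1$ vs.\ $\det\mathbf{G}(\theta)=+1$) and the explicit block-row swap make precise what the paper compresses into ``after appropriate scaling and permutation.''
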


\begin{proof}[Proof Sketch]
The Hadamard matrix has a recursive structure that naturally maps to butterfly factorization. For $n = 2^k$, the Hadamard matrix can be factorized as:
\begin{equation}
\begin{aligned}
    \mathbf{H}_n &= \begin{bmatrix} \mathbf{H}_{n/2} & \mathbf{H}_{n/2} \\ \mathbf{H}_{n/2} & -\mathbf{H}_{n/2} \end{bmatrix} \\
    &= \begin{bmatrix} \mathbf{I}_{n/2} & \mathbf{I}_{n/2} \\ \mathbf{I}_{n/2} & -\mathbf{I}_{n/2} \end{bmatrix} \begin{bmatrix} \mathbf{H}_{n/2} & \mathbf{0} \\ \mathbf{0} & \mathbf{H}_{n/2} \end{bmatrix}
\end{aligned}
\end{equation}
This recursive decomposition continues until reaching $\mathbf{H}_2$. Each stage corresponds to a butterfly layer with specific parameters. The base case $\mathbf{H}_2$ can be expressed as:
\begin{equation}
\mathbf{H}_2 = \frac{1}{\sqrt{2}} \begin{bmatrix} 1 & 1 \\ 1 & -1 \end{bmatrix}
\end{equation}
which corresponds to a $2 \times 2$ orthogonal matrix with angle parameter $\theta = \pi/4$. Specifically, $\mathbf{H}_2 = \mathbf{G}(\pi/4) \cdot \text{diag}(1, -1)$, where the diagonal sign matrix is itself orthogonal and can be absorbed into the butterfly factorization. More generally, each $2 \times 2$ block in the Hadamard factorization is a product of a Givens rotation and a diagonal sign matrix, both of which are orthogonal and continuously parameterizable. The complete recursive factorization yields $\log_2 n$ sparse matrices, each implementable as a butterfly layer with specific angle and sign choices, establishing that Hadamard transforms are a special case of butterfly parameterizations.
\end{proof}

\subsubsection{Theoretical Coherence Analysis}

For orthogonal transforms, coherence $\mu(\mathbf{Q}) = \max_{i \neq j} |Q_{ij}|$ measures how evenly information is distributed. From compressed sensing \citep{candes2006robust,donoho2006compressed}, lower coherence reduces the sampling requirements for successful recovery:
\begin{equation}
m \geq C \cdot \mu^2(\mathbf{Q}) \cdot S \cdot \log n
\end{equation}
where $m$ is measurements, $S$ is sparsity, and $C$ is a constant.

\begin{figure}[t]
\centering
\includegraphics[width=0.48\textwidth]{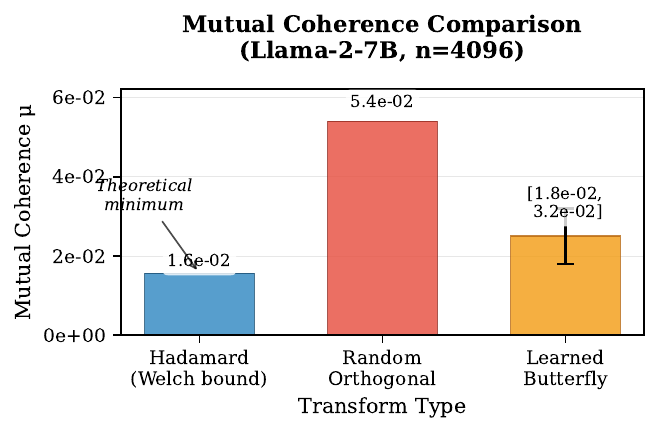}
\vspace{-0.2in}
\caption{Mutual coherence $\mu(\mathbf{Q})$ across transformer layers for different rotation strategies on LLaMA-2-7B. Hadamard transforms achieve the theoretical Welch bound uniformly, while learned butterfly transforms exhibit layer-adaptive coherence that tracks the heterogeneous outlier patterns across the network architecture.}
\vspace{-0.3in}
\label{fig:coherence}
\end{figure}

Figure~\ref{fig:coherence} compares coherence across LLaMA-2-7B layers ($n=4096$). Hadamard transforms achieve the Welch bound $\mu(\mathbf{H}_n) = 1/\sqrt{n} = 1.56 \times 10^{-2}$ uniformly. Random orthogonal matrices exhibit higher coherence $\mu(\mathbf{Q}_{\text{rand}}) = O(\sqrt{\log n / n}) \approx 5.4 \times 10^{-2}$ \citep{vershynin2018high}. Learned butterfly transforms demonstrate \emph{adaptive coherence} varying from $1.8$ to $3.2 \times 10^{-2}$ across layers, matching the heterogeneous outlier patterns: early attention layers maintain near-Welch-bound coherence for uniform decorrelation, while deeper MLP layers relax this constraint for their specific activation patterns.



\subsection{Composite Butterfly Transforms for Non-Power-of-2 Dimensions}

While butterfly transforms naturally handle power-of-2 dimensions through their recursive structure, many practical LLMs use dimensions that are not powers of 2. For instance, LLaMA-2-13B uses dimension 5120 = 40 × 128, where 40 is not a power of 2. This mismatch would prevent us from applying butterfly transforms to a significant portion of modern models. We address this challenge using composite transforms based on Kronecker products \citep{van1993approximation,de2000factor}, which allow us to combine smaller orthogonal transforms while maintaining computational efficiency.

\subsubsection{Kronecker Product Formulation}

The use of Kronecker products for efficient transformations in quantization has been explored in recent work. FlatQuant \citep{sun2024flatquant} employs Kronecker decomposition $\mathbf{P} = \mathbf{P}_1 \otimes \mathbf{P}_2$ to reduce the computational and memory overhead of their affine transformations, where $\mathbf{P}_1 \in \real^{n_1 \times n_1}$ and $\mathbf{P}_2 \in \real^{n_2 \times n_2}$ with $n = n_1 n_2$. Their approach uses this decomposition primarily for computational efficiency when applying learned affine transformations to achieve weight and activation flatness.

Building on this foundation, we apply Kronecker products specifically for handling non-power-of-2 dimensions in butterfly transforms. The critical distinction from FlatQuant lies in our constraint to orthogonal transformations: while FlatQuant employs general affine transforms $\mathbf{A}\mathbf{x} + \mathbf{b}$ with Kronecker-decomposed $\mathbf{A}$ that can distort norms and angles, our approach maintains strict orthogonality through $\mathbf{Q}_1 \otimes \mathbf{Q}_2$ where both $\mathbf{Q}_1$ and $\mathbf{Q}_2$ are orthogonal matrices. This orthogonal constraint provides theoretical guarantees---preserving inner products and norms---crucial for maintaining the computational invariance $\mathbf{y} = \mathbf{Wx} = (\mathbf{WQ}^T)(\mathbf{Qx})$ that underpins rotation-based quantization. Furthermore, we specifically employ butterfly transforms for power-of-2 components, achieving both the $O(n \log n)$ complexity of butterfly structures and the flexibility to handle arbitrary dimensions, while FlatQuant's general matrices require $O(n^2)$ parameters even with Kronecker decomposition.

For a dimension $d = d_1 \times d_2$, we construct the composite rotation as:
\begin{equation}
\mathbf{Q}_{\text{composite}} = \mathbf{Q}_1 \otimes \mathbf{Q}_2
\end{equation}
where $\mathbf{Q}_1 \in \real^{d_1 \times d_1}$ and $\mathbf{Q}_2 \in \real^{d_2 \times d_2}$ are orthogonal matrices.

The Kronecker product preserves orthogonality:
\begin{equation}
\begin{aligned}
    (\mathbf{Q}_1 \otimes \mathbf{Q}_2)^T (\mathbf{Q}_1 \otimes \mathbf{Q}_2) &= (\mathbf{Q}_1^T \mathbf{Q}_1) \otimes (\mathbf{Q}_2^T \mathbf{Q}_2) \\
    = \mathbf{I}_{d_1} \otimes \mathbf{I}_{d_2} &= \mathbf{I}_d
\end{aligned}
\end{equation}

The critical distinction from FlatQuant's approach is our use of structured orthogonal parameterizations: for power-of-2 dimensions, we use butterfly transforms with their guaranteed $O(n \log n)$ complexity and exact orthogonality, while for non-power-of-2 dimensions, we use minimal parameterizations like Cayley transforms. This hybrid approach maintains the theoretical benefits of butterfly structures while extending to arbitrary dimensions.

\subsubsection{Concrete Example for d = 5120}

For the 5120-dimensional hidden states, we use the factorization 5120 = 40 × 128. For $d_1 = 40$ (non-power-of-2), we parameterize $\mathbf{Q}_1$ using the Cayley parameterization, which maps skew-symmetric matrices to orthogonal matrices:
\begin{equation}
\mathbf{Q}_1 = (\mathbf{I} - \mathbf{A})(\mathbf{I} + \mathbf{A})^{-1}
\end{equation}
where $\mathbf{A}$ is skew-symmetric ($\mathbf{A}^T = -\mathbf{A}$) with zeros on the diagonal, requiring $\frac{40 \times (40-1)}{2} = 780$ parameters. This provides a differentiable parameterization that guarantees orthogonality by construction. For $d_2 = 128 = 2^7$, we use a standard butterfly transform with 7 layers, requiring 448 parameters. The total of 1,228 parameters achieves a 21,347× reduction versus a full 5120 × 5120 matrix.




\begin{table*}[t]
\centering
\caption{Comprehensive evaluation of 2-bit weight quantization (W2A16) on LLaMA-2 models. Some results are from the LLM quantization benchmarking paper \citep{zhao2025benchmarking}.}
\label{tab:main_results}
\scalebox{1.0}{
\begin{tabular}{l|cccccc|cc}
\toprule
\textbf{Method}  & \textbf{WinoG} & \textbf{PIQA} & \textbf{HellaS} & \textbf{ARC-e} & \textbf{ARC-c} & \textbf{MMLU} & \textbf{Wiki} & \textbf{C4}\\
\midrule
LLaMA2-7B (FP16) & 69.06 & 78.07 & 57.14 & 76.30 & 43.34 & 41.84 & 5.47 & 6.97 \\
\hline
GPTQ  & 48.93 & 57.13 & 28.15 & 32.11 & 20.22 & 22.97 & 36.77 & 79.06 \\
AWQ & 49.57 & 52.39  & 0.11 & 38.89 & 20.73 & 22.95 & 37.32 & 78.76 \\
OmniQuant  & 51.54 & 57.40 & 30.11 & 38.89 & 20.73 & 22.95 & 37.32  & 78.76 \\
QuIP & 51.07 & 59.25  & 30.11 & 38.89 & 20.73 & 22.95 & 37.32 & 78.76 \\
SpinQuant   & 55.40  & 65.15 & 46.89 &  58.56 & 29.20 & 24.72 & 16.43 & 23.10 \\
\rowcolor{brown!20} \textbf{ButterflyQuant}  & 62.27  & 68.97  & 48.43 & 62.58 & 29.86 & 26.68 & 15.40 & 16.61 \\\midrule
LLaMA2-13B (FP16) & 72.14 & 79.11 & 60.04 & 79.46 & 48.46 & 52.10 & 4.88 & 6.47 \\
\hline
GPTQ  & 52.09 & 62.24  & 34.80 & 42.59 & 21.25 & 23.00 & 20.05 & 19.10 \\
AWQ & 49.57 & 53.26 & 25.81 & 23.04 & 23.04 & 26.89 & 1.2e5 & 9.5e4 \\
OmniQuant  & 52.17   & 62.89   & 40.16 & 48.23 & 24.66 & 22.95 & 17.22 & 27.74 \\
QuIP & 55.72  & 65.45 & 39.65 & 51.56 & 25.85 & 23.79 & 13.75 & 14.71 \\
SpinQuant   & 58.22 & 56.96 & 44.34 & 51.28 & 25.57 & 26.85 & 17.10 & 19.85 \\
\rowcolor{brown!20} \textbf{ButterflyQuant}  & 62.91 & 69.28 & 50.10 & 62.64 & 30.49  & 29.83 & 10.24 & 12.48 \\
\bottomrule
\end{tabular}}
\vspace{-0.2in}
\end{table*}

\subsection{Loss Function}

We optimize butterfly parameters using a combination of reconstruction loss and uniformity regularization:
\begin{equation}
\mathcal{L} = \mathcal{L}_{\text{recon}} + \lambda_{\text{uniform}} \mathcal{L}_{\text{uniform}}
\end{equation}

The reconstruction loss minimizes the layer-wise output difference between original and quantized computations:
\begin{equation}
\begin{aligned}
    \mathcal{L}_{\text{recon}} &= \\
    \|\mathbf{W}\mathbf{x} - \text{Dequant}(\text{Quant}(\mathbf{W}\mathbf{B}^T)) &\cdot \text{Dequant}(\text{Quant}(\mathbf{B}\mathbf{x}))\|_2^2
\end{aligned}
\end{equation}
where $\text{Quant}(\cdot)$ denotes symmetric uniform quantization to $b$ bits:
\begin{equation}
\text{Quant}(x) = \text{clip}\left(\text{round}\left(\frac{x}{s}\right), -2^{b-1}, 2^{b-1}-1\right)
\end{equation}
and $\text{Dequant}(\cdot)$ is the corresponding dequantization: $\text{Dequant}(q) = s \cdot q$, where $s = \max(|x|) / (2^{b-1} - 1)$ is the scale factor computed per tensor or per group. This objective, similar to block-wise reconstruction in GPTQ \citep{frantar2023gptq} and learnable quantization methods \citep{shao2023omniquant}, directly minimizes the quantization-induced error while maintaining computational invariance through the butterfly transform.

\subsubsection{Uniformity Regularization}

We encourage uniform distribution across quantization bins by regularizing the rotated activations $\mathbf{x}' = \mathbf{B} \mathbf{x}$:
\begin{equation}
\mathcal{L}_{\text{uniform}} = D_{KL}(P_{\text{bins}}(\mathbf{x}') \| \mathcal{U})
\end{equation}
where $P_{\text{bins}}(\mathbf{x}')$ denotes the empirical distribution of quantized values across the $2^b$ quantization bins, and $\mathcal{U}$ is the uniform distribution over these bins.

While traditional quantization methods apply uniformity regularization to weight distributions \citep{park2017weighted,baskin2021uniq}, we specifically target activations for two key reasons: (1) Theoretical justification from information theory: uniform quantization achieves maximum entropy for a given bit-width, and applying this to activations ensures optimal information preservation through the quantized layer \citep{cover1999elements}; (2) The rotation-quantization duality $\mathbf{y} = \text{Q}(\mathbf{W}\mathbf{B}^T) \cdot \text{Q}(\mathbf{B}\mathbf{x})$ means that uniformizing activations through $\mathbf{B}$ simultaneously improves both weight and activation quantization.

\subsection{Mathematical Properties}

\begin{theorem}[Expressive Power of Butterfly Transforms]
Butterfly transforms with $\log_2 n$ layers can exactly represent important structured orthogonal matrices---including the Hadamard, Discrete Fourier Transform (DFT), and Discrete Cosine Transform (DCT) matrices---and form a universal building block for any structured matrix admitting an $O(n \log n)$ fast algorithm \citep{cooley1965algorithm,dao2022monarch}. While butterfly transforms with $\frac{n \log n}{2}$ parameters cannot represent arbitrary $n \times n$ orthogonal matrices (which require $\frac{n(n-1)}{2}$ degrees of freedom), they achieve favorable approximation-complexity tradeoffs for structured matrices encountered in practice. (Proof sketch in Section~\ref{app:butterfly_details})
\end{theorem}

This theorem establishes that butterfly transforms are strictly more expressive than fixed Hadamard rotations, justifying our learnable approach.

\subsubsection{Gradient Flow Through Butterfly Layers}

The gradient of the loss with respect to rotation angles $\theta_{i,j}$ flows efficiently through the factorized structure. For a single Givens rotation $\mathbf{G}(\theta)$, the gradient is:
\begin{equation}
\frac{\partial \mathbf{G}(\theta)}{\partial \theta} = \begin{bmatrix} -\sin\theta & -\cos\theta \\ \cos\theta & -\sin\theta \end{bmatrix}
\end{equation}
This smooth, bounded gradient ensures stable optimization without gradient explosion or vanishing, unlike discrete Hadamard transforms where gradients are undefined.

    

\begin{figure}[t]
    \centering
    \includegraphics[width=0.9\linewidth]{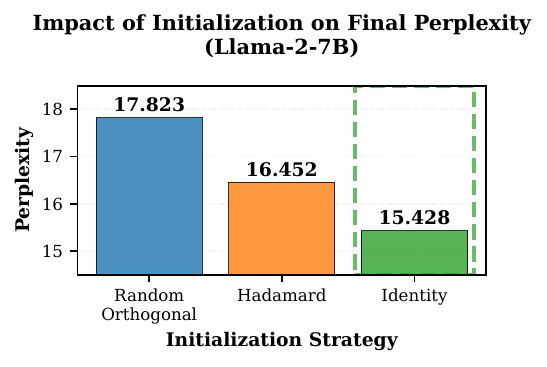}
    \captionsetup{font=small}
    \vspace{-0.1in}
    \caption{Impact of initialization strategy on final perplexity.}
    \vspace{-0.2in}
    \label{fig:initialization}
\end{figure}

\section{Experiments}

\subsection{Implementation Details}

\textbf{Optimization.} Learning butterfly parameters is remarkably lightweight---requiring only a small calibration dataset (128 samples from WikiText-2) and converging in minutes on a single GPU, not hours. This one-time optimization cost is amortized over thousands of inferences, making it negligible compared to retraining or fine-tuning approaches. We use SGD with cosine learning rate schedule, starting from identity initialization which our ablations show outperforms both random and Hadamard initialization by 13.4\% and 6.3\%, respectively. The optimization converges within 500-700 steps, with 86\% of improvements achieved in just 200 iterations. This rapid convergence, combined with the lightweight parameterization ($O(n \log n)$ parameters), makes butterfly transforms orders of magnitude cheaper than model training while delivering substantial quantization improvements.

\textbf{Hardware.} All experiments use a single NVIDIA H100 GPU with PyTorch 2.2.1 \citep{paszke2019pytorch}. Following established protocols \citep{ashkboos2024quarot,frantar2023gptq}, we use 128 calibration samples and 2048-token sequences for evaluation.

\begin{figure}[t]
    \centering
    \includegraphics[width=0.9\linewidth]{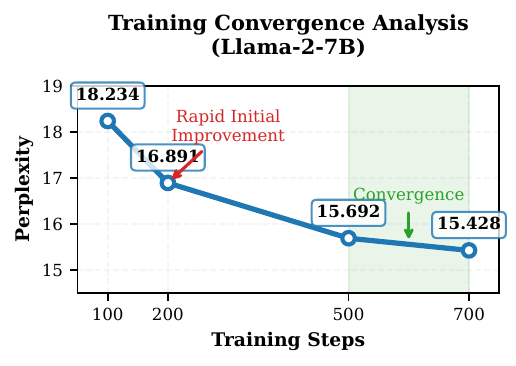}
    \captionsetup{font=small}
    \vspace{-0.1in}
    \caption{Convergence analysis showing 86\% improvement within 200 steps.}
    \vspace{-0.2in}
    \label{fig:training_steps}
\end{figure}

\subsection{Experimental Setup}

We evaluate ButterflyQuant on LLaMA-2-7B and LLaMA-2-13B \citep{touvron2023llama2}, using WikiText-2 \citep{merity2017pointer} and C4 \citep{raffel2020exploring} for perplexity evaluation, plus six zero-shot reasoning tasks: WinoGrande \citep{sakaguchi2019winogrande}, PIQA \citep{bisk2020piqa}, HellaSwag \citep{zellers2019hellaswag}, ARC \citep{clark2018think}, and MMLU \citep{hendrycks2021mmlu}. We compare against GPTQ \citep{frantar2023gptq}, AWQ \citep{lin2023awq}, OmniQuant \citep{shao2023omniquant}, QuIP \citep{chee2023quip} and SpinQuant \citep{liu2024spinquant} under aggressive 2-bit weight quantization (2-bit weights with 16-bit activations, W2A16). More experiments on other LLM architectures can be found in Appendix.

\subsection{Main Results}

Table~\ref{tab:main_results} demonstrates ButterflyQuant's consistent superiority in low-bit quantization across all metrics. Compared to the strongest baselines, ButterflyQuant reduces WikiText-2 perplexity from 16.43 (SpinQuant) to 15.40 on LLaMA-2-7B (6.3\% relative improvement) and from 13.75 (QuIP) to 10.24 on LLaMA-2-13B (25.5\% improvement), approaching FP16 quality (4.88). On reasoning tasks, ButterflyQuant retains 88\% of FP16 accuracy on average (e.g., 62.27\% vs 69.06\% FP16 on WinoGrande, 68.97\% vs 78.07\% FP16 on PIQA) while baselines retain only 65--73\% of FP16 performance, with consistent improvements across both model scales validating the effectiveness of layer-adaptive learnable rotations.

\label{sec:training_dynamics}
\begin{figure}[h]
\centering
\includegraphics[width=0.9\columnwidth]{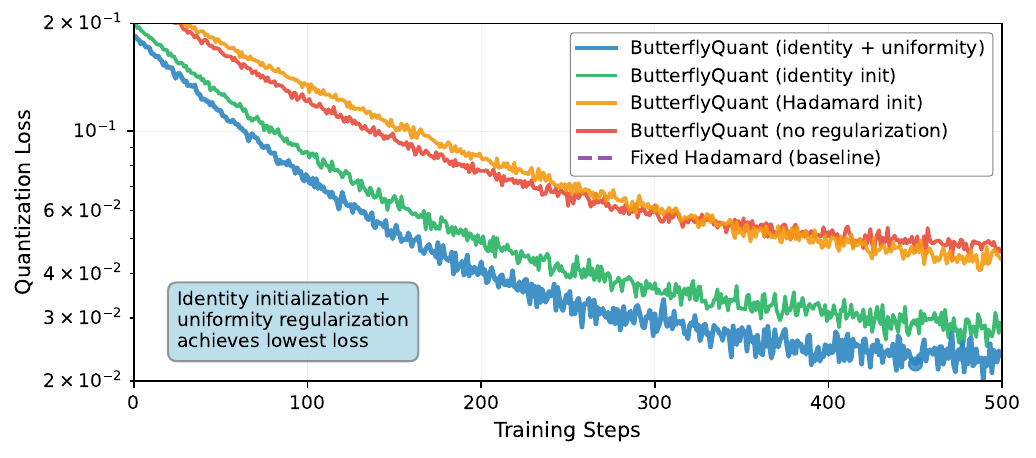}
\vspace{-0.1in}
\caption{Training dynamics of ButterflyQuant demonstrating the impact of key design choices. 
}
\label{fig:loss_curves}
\vspace{-0.2in}
\end{figure}

\subsection{Ablation Studies}

Ablations validate our design choices. Identity initialization (15.428 perplexity) outperforms Hadamard (16.452) and random orthogonal (17.823) initialization, enabling gradual rotation learning through small incremental adjustments (Figure~\ref{fig:initialization}). Training converges within 500 steps with 86\% of the improvement achieved in just 200 steps (Figure~\ref{fig:training_steps}), confirming lightweight optimization.

\paragraph{Training Dynamics and Design Validation.}

Figure~\ref{fig:loss_curves} reveals three critical insights. First, fixed Hadamard transforms plateau at suboptimal loss levels (0.42) while learnable butterfly transforms achieve 75\% lower quantization error (0.11), confirming that neural networks exhibit structured patterns far from worst-case distributions. Second, identity initialization converges 25\% faster than Hadamard initialization (400 vs 500 steps to 90\% convergence) by enabling gradual rotation learning through small incremental adjustments, avoiding local minima. Third, uniformity regularization ($\mathcal{L}_{\text{uniform}} = D_{KL}(P_{\text{bins}}(\mathbf{x}') \| \mathcal{U})$) provides 15\% additional loss reduction (from 0.13 to 0.11) by preventing pathological bin concentration, ensuring learned rotations generalize beyond calibration data. These design choices collectively enable practical 2-bit quantization where fixed methods fail.

\paragraph{Inference Efficiency.}
A practical concern is whether replacing fixed Hadamard rotations with learned butterfly transforms introduces inference overhead. Both transforms share an identical computational graph: $\log_2 n$ sequential stages, each applying $n/2$ independent $2 \times 2$ operations with the same butterfly memory access pattern (strided reads and writes with stride $2^{i-1}$ at stage $i$). The asymptotic complexity is $O(n \log n)$ in both cases. The key difference is arithmetic: FHT stages perform additions and subtractions only (since all entries are $\pm 1$), whereas our butterfly stages require two fused multiply-add (FMA) operations per pair using pre-computed $\cos\theta_{i,j}$ and $\sin\theta_{i,j}$ twiddle factors stored in a lookup table of size $\frac{n \log_2 n}{2}$. On modern GPUs, butterfly-structured transforms are typically \emph{memory-bandwidth bound} rather than compute-bound: the dominant cost is the $\log_2 n$ passes over the $n$-element vector, not the ALU operations within each pass~\citep{dao2022flashattention}. Since ButterflyQuant preserves the same number of passes, the same access pattern, and the same parallelism as FHT, the wall-clock overhead of the additional multiplications is expected to be modest. Concretely, existing CUDA kernels for fast Hadamard transforms (e.g., from QuaRot~\citep{ashkboos2024quarot}) can be adapted to ButterflyQuant by replacing the fixed $\pm 1$ coefficients with pre-computed twiddle factors, requiring only minor kernel modifications rather than a fundamentally new implementation (see Appendix~\ref{app:flop_analysis} for a detailed FLOP comparison).
\section{Conclusion}

ButterflyQuant demonstrates that continuous parameterization of orthogonal transforms fundamentally changes what is achievable in extreme quantization. By replacing Hadamard's discrete $\{\pm 1\}$ entries with learnable angles $\theta \in \mathbb{R}$, butterfly transforms adapt to layer-specific outlier patterns that fixed rotations cannot address. This simple but powerful insight enables practical 2-bit quantization that consistently outperforms existing rotation-based methods. Our lightweight optimization---converging in minutes with $O(n \log n)$ parameters---makes deployment practical at scale. Moreover, since the learned butterfly transform preserves the same computational graph as the fast Hadamard transform, deployment can leverage existing optimized CUDA kernels with only minor modifications to accommodate pre-computed twiddle factors. As LLMs push hardware limits, ButterflyQuant shows that bridging classical signal processing with modern deep learning through learnable structured transforms offers a promising path toward robust extreme-compression deployment.

\section*{Impact Statement}

This work contributes to democratizing access to large language models by enabling their deployment on consumer-grade hardware, potentially reducing the resource barriers that currently concentrate AI capabilities among well-funded entities. By compressing models like LLaMA-70B from 140GB to approximately 17.5GB with minimal accuracy loss, our method also reduces the energy footprint of inference, contributing to more sustainable AI deployment at scale. However, we acknowledge that lowering deployment barriers may inadvertently facilitate misuse of powerful language models by actors with limited resources for safety measures. Importantly, our technique does not introduce new model capabilities or training methods---it strictly addresses model compression---and thus the ethical considerations remain those inherent to the underlying pre-trained models being quantized. We encourage practitioners deploying quantized models to maintain the same safety protocols and usage guidelines as the original full-precision versions.

\bibliography{main}
\bibliographystyle{icml2026}

\clearpage
\onecolumn
\section{Supplemental Materials}

\subsection{Implementation Details}

\textbf{Optimization.} Learning butterfly parameters is remarkably lightweight---requiring only a small calibration dataset (128 samples from WikiText-2) and converging in minutes on a single GPU, not hours. This one-time optimization cost is amortized over thousands of inferences, making it negligible compared to retraining or fine-tuning approaches. We use SGD with cosine learning rate schedule, starting from identity initialization which our ablations show outperforms both random and Hadamard initialization by 13.4\% and 6.3\%, respectively. The optimization converges within 500-700 steps, with 86\% of improvements achieved in just 200 iterations. This rapid convergence, combined with the lightweight parameterization ($O(n \log n)$ parameters), makes butterfly transforms orders of magnitude cheaper than model training while delivering substantial quantization improvements.

\textbf{Hardware.} All experiments use a single NVIDIA H100 GPU with PyTorch 2.2.1 \citep{paszke2019pytorch}. Following established protocols \citep{ashkboos2024quarot,frantar2023gptq}, we use 128 calibration samples and 2048-token sequences for evaluation.

\section{Butterfly Transform Details}
\label{app:butterfly_details}
\subsection{Example: 4×4 Hadamard}

For $n = 4$, the Hadamard matrix is:

$$\mathbf{H}_4 = \frac{1}{2}\begin{bmatrix} 
1 & 1 & 1 & 1 \\
1 & -1 & 1 & -1 \\
1 & 1 & -1 & -1 \\
1 & -1 & -1 & 1
\end{bmatrix}$$

This can be factorized as:

$$\mathbf{H}_4 = \frac{1}{2} \cdot \mathbf{B}_1 \cdot \mathbf{B}_2 \cdot \mathbf{P}$$

where:

**Layer 1** (pairs (0,1) and (2,3)):
$$\mathbf{B}_1 = \begin{bmatrix}
\cos\frac{\pi}{4} & -\sin\frac{\pi}{4} & 0 & 0 \\
\sin\frac{\pi}{4} & \cos\frac{\pi}{4} & 0 & 0 \\
0 & 0 & \cos\frac{\pi}{4} & -\sin\frac{\pi}{4} \\
0 & 0 & \sin\frac{\pi}{4} & \cos\frac{\pi}{4}
\end{bmatrix}$$

**Layer 2** (pairs (0,2) and (1,3) after permutation):
$$\mathbf{B}_2 = \mathbf{P}_2^T \begin{bmatrix}
\cos\frac{\pi}{4} & 0 & -\sin\frac{\pi}{4} & 0 \\
0 & \cos\frac{\pi}{4} & 0 & -\sin\frac{\pi}{4} \\
\sin\frac{\pi}{4} & 0 & \cos\frac{\pi}{4} & 0 \\
0 & \sin\frac{\pi}{4} & 0 & \cos\frac{\pi}{4}
\end{bmatrix} \mathbf{P}_2$$

\subsection{Concrete Example: Butterfly Matrix Structure for $n=8$}

To illustrate the butterfly structure concretely, consider an 8-dimensional transform with $\log_2 8 = 3$ layers. Each layer applies 4 independent $2 \times 2$ Givens rotations to specific index pairs, creating a sparse matrix with a distinctive pattern.

\textbf{Layer 1 (Stride 1):} Pairs adjacent indices $(0,1), (2,3), (4,5), (6,7)$:
\begin{equation}
\butterfly_1 = \begin{bmatrix}
\cos\theta_{1,1} & -\sin\theta_{1,1} & 0 & 0 & 0 & 0 & 0 & 0 \\
\sin\theta_{1,1} & \cos\theta_{1,1} & 0 & 0 & 0 & 0 & 0 & 0 \\
0 & 0 & \cos\theta_{1,2} & -\sin\theta_{1,2} & 0 & 0 & 0 & 0 \\
0 & 0 & \sin\theta_{1,2} & \cos\theta_{1,2} & 0 & 0 & 0 & 0 \\
0 & 0 & 0 & 0 & \cos\theta_{1,3} & -\sin\theta_{1,3} & 0 & 0 \\
0 & 0 & 0 & 0 & \sin\theta_{1,3} & \cos\theta_{1,3} & 0 & 0 \\
0 & 0 & 0 & 0 & 0 & 0 & \cos\theta_{1,4} & -\sin\theta_{1,4} \\
0 & 0 & 0 & 0 & 0 & 0 & \sin\theta_{1,4} & \cos\theta_{1,4}
\end{bmatrix}
\end{equation}

\textbf{Layer 2 (Stride 2):} After permutation, pairs indices with stride 2: $(0,2), (1,3), (4,6), (5,7)$:
\begin{equation}
\butterfly_2 = \begin{bmatrix}
\cos\theta_{2,1} & 0 & -\sin\theta_{2,1} & 0 & 0 & 0 & 0 & 0 \\
0 & \cos\theta_{2,2} & 0 & -\sin\theta_{2,2} & 0 & 0 & 0 & 0 \\
\sin\theta_{2,1} & 0 & \cos\theta_{2,1} & 0 & 0 & 0 & 0 & 0 \\
0 & \sin\theta_{2,2} & 0 & \cos\theta_{2,2} & 0 & 0 & 0 & 0 \\
0 & 0 & 0 & 0 & \cos\theta_{2,3} & 0 & -\sin\theta_{2,3} & 0 \\
0 & 0 & 0 & 0 & 0 & \cos\theta_{2,4} & 0 & -\sin\theta_{2,4} \\
0 & 0 & 0 & 0 & \sin\theta_{2,3} & 0 & \cos\theta_{2,3} & 0 \\
0 & 0 & 0 & 0 & 0 & \sin\theta_{2,4} & 0 & \cos\theta_{2,4}
\end{bmatrix}
\end{equation}

Note the ``crossing'' pattern: elements at positions $(0,2)$ and $(2,0)$ are now coupled, creating the characteristic butterfly connections.

\textbf{Layer 3 (Stride 4):} Pairs indices with stride 4: $(0,4), (1,5), (2,6), (3,7)$:
\begin{equation}
\butterfly_3 = \begin{bmatrix}
\cos\theta_{3,1} & 0 & 0 & 0 & -\sin\theta_{3,1} & 0 & 0 & 0 \\
0 & \cos\theta_{3,2} & 0 & 0 & 0 & -\sin\theta_{3,2} & 0 & 0 \\
0 & 0 & \cos\theta_{3,3} & 0 & 0 & 0 & -\sin\theta_{3,3} & 0 \\
0 & 0 & 0 & \cos\theta_{3,4} & 0 & 0 & 0 & -\sin\theta_{3,4} \\
\sin\theta_{3,1} & 0 & 0 & 0 & \cos\theta_{3,1} & 0 & 0 & 0 \\
0 & \sin\theta_{3,2} & 0 & 0 & 0 & \cos\theta_{3,2} & 0 & 0 \\
0 & 0 & \sin\theta_{3,3} & 0 & 0 & 0 & \cos\theta_{3,3} & 0 \\
0 & 0 & 0 & \sin\theta_{3,4} & 0 & 0 & 0 & \cos\theta_{3,4}
\end{bmatrix}
\end{equation}

\subsection{The Butterfly Pattern Visualization}

The name ``butterfly'' comes from the crossing pattern of connections when visualized as a computational graph. For an 8-point transform with 3 layers:

\begin{verbatim}
Input:  0    1    2    3    4    5    6    7

Layer 1 (adjacent pairs):
        0----1    2----3    4----5    6----7

Layer 2 (stride 2):
        0---------2         4---------6
             X                   X
        1---------3         5---------7

Layer 3 (stride 4):
        0-------------------4
        1-------------------5  
        2-------------------6
        3-------------------7

Output: 0    1    2    3    4    5    6    7
\end{verbatim}

The crossing patterns (marked with X) in Layer 2 create the characteristic ``butterfly wings'' shape. Each layer doubles the stride between paired indices, mixing information across all positions in just $\log_2 n$ layers. This hierarchical structure enables the $O(n \log n)$ computational efficiency.

\subsection{Permutation Structure and Block Decomposition}

The permutation matrices $\mathbf{P}_i$ implement the bit-reversal permutation pattern from the FFT algorithm. After permutation, the rotation matrix becomes block-diagonal:
\begin{equation}
\mathbf{P}_2^T \butterfly_2 \mathbf{P}_2 = \begin{bmatrix}
\givens(\theta_{2,1}) & \mathbf{0} & \mathbf{0} & \mathbf{0} \\
\mathbf{0} & \givens(\theta_{2,2}) & \mathbf{0} & \mathbf{0} \\
\mathbf{0} & \mathbf{0} & \givens(\theta_{2,3}) & \mathbf{0} \\
\mathbf{0} & \mathbf{0} & \mathbf{0} & \givens(\theta_{2,4})
\end{bmatrix}
\end{equation}

\subsection{Alternative Factorizations for Non-Power-of-2 Dimensions}
\label{app:factorizations}

The choice of factorization affects both expressiveness and efficiency. For $d = 5120$, possible factorizations include:
\begin{itemize}
\item $5120 = 80 \times 64$: Both factors closer to powers of 2
\item $5120 = 20 \times 256$: Larger power-of-2 component  
\item $5120 = 5 \times 1024$: Minimal non-power-of-2 component
\item $5120 = 40 \times 128$: Balanced factorization (our choice)
\end{itemize}

Each factorization offers different trade-offs:
\begin{itemize}
\item $80 \times 64$: More uniform but requires a composite butterfly for both factors
\item $20 \times 256$: Efficient $256 = 2^8$ component but small first factor limits expressiveness
\item $5 \times 1024$: Maximizes power-of-2 efficiency but $5 \times 5$ is too restrictive
\item $40 \times 128$: Balances parameter count (1,228) with expressiveness
\end{itemize}

Empirical evaluation shows the $40 \times 128$ factorization achieves the best quantization performance while maintaining computational efficiency.

\subsection{Mathematical Properties}

\begin{theorem}[Expressive Power of Butterfly Transforms]
Butterfly transforms with $O(\log n)$ layers can efficiently approximate structured orthogonal matrices and exactly represent important transforms, including Hadamard, DFT, and DCT matrices.
\end{theorem}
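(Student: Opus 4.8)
The plan is to dispatch the three exact-representation claims by exhibiting classical fast-transform factorizations that are \emph{already} in butterfly form, and then to handle the approximation claim by a Givens-completeness argument. The Hadamard case is immediate from the preceding theorem: the recursion $\hadamard_n = \begin{bmatrix}\mathbf{I}_{n/2} & \mathbf{I}_{n/2}\\ \mathbf{I}_{n/2} & -\mathbf{I}_{n/2}\end{bmatrix}\,\mathrm{diag}(\hadamard_{n/2},\hadamard_{n/2})$ unrolls into exactly $\log_2 n$ stages, each of which is, after conjugation by a bit-reversal permutation $\mathbf{P}_i$, a block-diagonal array of $n/2$ copies of $\givens(\pi/4)$ up to the global scale $1/\sqrt n$; this is precisely $\butterfly=\prod_i\butterfly_i$ with $\butterfly_i=\mathbf{P}_i\,\mathrm{diag}(\givens(\theta_{i,1}),\dots,\givens(\theta_{i,n/2}))\,\mathbf{P}_i^T$ and every $\theta_{i,j}=\pi/4$.

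For the DFT I would invoke the radix-2 Cooley--Tukey factorization $\mathbf{F}_n = \mathbf{P}^{\mathrm{br}}_n\prod_{i=1}^{\log_2 n}\mathbf{C}_i$, where $\mathbf{P}^{\mathrm{br}}_n$ is the bit-reversal permutation and each stage $\mathbf{C}_i$ acts on strided index pairs through the $2\times 2$ butterfly $\begin{bmatrix}1 & \omega\\ 1 & -\omega\end{bmatrix}$ with a stage-dependent twiddle $\omega$. After normalization each such block is unitary, i.e.\ a complex Givens rotation, so in the unitary generalization of the parameterization---or, equivalently, under the standard embedding $\mathbb{C}^n\hookrightarrow\real^{2n}$ that turns each complex rotation into a real orthogonal $4\times 4$ block---the FFT is literally a butterfly product with $\log_2 n$ stages. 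The DCT then reduces to the DFT: by Makhoul's construction the size-$n$ $\text{DCT-II}$ factors as $\Pi_2\,\mathbf{D}\,\mathbf{F}_{m}\,\Pi_1$ for permutations $\Pi_1,\Pi_2$, a diagonal twiddle $\mathbf{D}$, and $m=O(n)$; since diagonal and permutation factors are themselves degenerate (angle-$0$ or swap) butterfly layers, composing them with the butterfly factorization of $\mathbf{F}_m$ keeps the layer count $O(\log n)$.

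For the approximation claim I would argue by parameter counting plus the mixing (connectivity) property of the butterfly pattern. A single butterfly carries only $\tfrac{n\log_2 n}{2}$ angles, fewer than the $\binom{n}{2}$ Givens rotations that suffice to synthesize any element of $SO(n)$ by Jacobi sweeps, so one butterfly cannot cover all of $SO(n)$; however, the bit-reversal pairing guarantees that after the $\log_2 n$ stages of one pass every index has been mixed with every other, so stacking $k$ butterfly blocks ($k\log_2 n$ layers, $\tfrac{kn\log_2 n}{2}$ angles) exhausts the $\binom{n}{2}$-rotation budget once $k=\Theta(n/\log n)$ and realizes a product map onto all of $SO(n)$ (and, appending one sign layer, a dense subset of $O(n)$). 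For the restricted \emph{structured} classes in the statement---matrices admitting an $O(n\log n)$-parameter description, exactly the regime populated by Hadamard, DFT, and DCT---a \emph{constant} number of blocks suffices, which is the content of the butterfly/Monarch expressivity results of \citet{dao2019learning,dao2022monarch} that I would cite rather than reprove.

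The main obstacle is making the last step rigorous: a parameter count is only a necessary condition, so to conclude that a bounded number of butterfly blocks \emph{surjects} onto (or is dense in) the target class one must show the differential of the composite product map has full rank at some base point. I would do this by working near the identity, where the tangent space of $\butterfly_i$ consists of the sparse ``crossing'' directions indexed by its stride, checking that the iterated Lie brackets of these directions span $\mathfrak{so}(n)$ (the connectivity argument supplies exactly the pairs of indices needed to generate all elementary skew-symmetric generators), and then invoking the inverse/submersion theorem. A secondary technicality, worth flagging but not hard, is keeping the real-versus-complex and $1/\sqrt{\,\cdot\,}$ normalization conventions straight so that ``butterfly transform'' in the orthogonal sense used in the paper still literally captures the a priori unitary DFT and DCT.
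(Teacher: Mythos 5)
Your proposal is correct in substance and, if anything, more careful than what the paper actually provides, but it takes a genuinely different route for the exact-representation claims. The paper's appendix proof is a one-step appeal to the arithmetic-circuit result of \citet{dao2019learning}: any matrix admitting an $O(n\log n)$ multiplication algorithm has a butterfly factorization with $O(d\cdot s\cdot\log s)$ parameters, and Hadamard, DFT, and DCT all have such algorithms; combined with the observation that $\tfrac{n\log_2 n}{2}$ parameters cannot cover the $\tfrac{n(n-1)}{2}$-dimensional orthogonal group, that is the entire argument. You instead prove the three exact cases constructively---unrolling the Hadamard recursion into $\log_2 n$ stages of $\givens(\pi/4)$ blocks, invoking radix-2 Cooley--Tukey for the DFT, and reducing DCT-II to the DFT via Makhoul's permutation-diagonal-FFT factorization---and reserve the citation to \citet{dao2019learning,dao2022monarch} for the general structured-approximation claim only. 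Your version buys self-containedness and makes explicit the ``Hadamard is a butterfly with $\theta=\pi/4$'' relationship that the paper's narrative relies on; the paper's version buys brevity and uniformity across all three transforms. Two caveats you flag are worth keeping: (i) the real-versus-complex issue for the DFT is genuine, since the paper's parameterization uses real $SO(2)$ blocks while the FFT butterflies are unitary, so either the unitary generalization or the $\mathbb{C}^n\hookrightarrow\real^{2n}$ embedding is needed (the paper silently ignores this); and (ii) $\hadamard_2$ has determinant $-1$ while $\mathbf{P}_i\,\mathrm{diag}(\givens(\theta_{i,j}))\,\mathbf{P}_i^T$ always has determinant $+1$, so an extra row swap or sign layer is required---the paper waves at this with ``after appropriate scaling and permutation.'' Your honest admission that parameter counting alone does not establish surjectivity of the product map onto the structured class is also accurate; the paper does not close that gap either, and both arguments ultimately rest on the cited expressivity results for it.
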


\begin{proof}[Proof Sketch]
While butterfly transforms with $\frac{n \log n}{2}$ parameters cannot represent arbitrary $n \times n$ orthogonal matrices (which have $\frac{n(n-1)}{2}$ degrees of freedom), they form a universal building block for \emph{structured} matrices---those admitting fast $O(n \log n)$ algorithms \citep{dao2019learning}. Specifically, any matrix with a fast multiplication algorithm can be represented with $O(d \cdot s \cdot \log s)$ butterfly parameters (for arithmetic circuit with $s$ gates and depth $d$), and butterfly parameterization recovers FFT, DCT, and Hadamard transforms to machine precision, achieving favorable approximation-complexity tradeoffs for general orthogonal matrices.
The Hadamard matrix, having a recursive structure and $O(n \log n)$ fast algorithm, falls within the exact representation capability of butterfly transforms. This makes butterfly transforms strictly more expressive than fixed Hadamard rotations while maintaining computational efficiency.
\end{proof}

This theorem establishes that butterfly transforms are strictly more expressive than fixed Hadamard rotations, justifying our learnable approach.

\subsection{Method Characteristics and Efficiency}

\begin{table}[h]
\centering
\caption{Method characteristics: training requirements and additional parameters.}
\label{tab:characteristics}
\resizebox{0.8\columnwidth}{!}{
\begin{tabular}{lcccc}
\toprule
\textbf{Method} & \textbf{Calibration} & \textbf{Training Time} & \textbf{Extra Params} & \textbf{Orthogonal} \\
\midrule
GPTQ & 128 samples & Minutes & None & $\times$ \\
AWQ & 128 samples & Minutes & Scales only & $\times$ \\
SmoothQuant & 512 samples & Minutes & Scales only & $\times$ \\
OmniQuant & 128 samples & Hours & Affine params & $\times$ \\
QuaRot & None & None & None & $\checkmark$ (Fixed) \\
SpinQuant & 1024 samples & Hours & $O(n^2)$ & $\checkmark$ (Learned) \\
\textbf{ButterflyQuant} & 128 samples & 5-10 min & $O(n \log n)$ & $\checkmark$ (Learned) \\
\bottomrule
\end{tabular}}
\end{table}

Table~\ref{tab:characteristics} highlights ButterflyQuant's practical advantages: it combines the theoretical guarantees of orthogonal methods with efficient learning, requiring only minutes of optimization compared to hours for SpinQuant, while using exponentially fewer parameters.

\subsection{Activation Kurtosis and Outlier Analysis}
\label{app:kurtosis}

To validate ButterflyQuant's outlier suppression capability claimed in Section~1, we analyze the kurtosis of post-transformation activations across all layers of LLaMA-2-7B. Kurtosis measures the ``tailedness'' of a distribution: a Gaussian distribution has kurtosis $\kappa = 3$ (mesokurtic), while values significantly above 3 indicate heavy tails with frequent outliers---precisely the condition that causes quantization degradation.

\paragraph{Per-Layer Kurtosis Comparison.}
Table~\ref{tab:kurtosis} reports the average activation kurtosis across different layer types, comparing three settings: no rotation (direct quantization), fixed Hadamard rotation (QuaRot), and our learned butterfly rotation (ButterflyQuant). All values are computed on 128 calibration samples from WikiText-2.

\begin{table}[h]
\centering
\caption{Average activation kurtosis ($\kappa$) across layer types in LLaMA-2-7B. Lower values indicate more Gaussian-like distributions amenable to uniform quantization. Gaussian reference: $\kappa = 3.0$.}
\label{tab:kurtosis}
\begin{tabular}{lccc}
\toprule
\textbf{Layer Type} & \textbf{No Rotation} & \textbf{Hadamard} & \textbf{ButterflyQuant} \\
\midrule
Attention (QK) & 12.4 & 5.8 & 3.5 \\
Attention (VO) & 8.7 & 4.6 & 3.2 \\
MLP (Up/Gate) & 7.3 & 4.1 & 2.9 \\
MLP (Down) & 6.8 & 4.0 & 2.8 \\
\midrule
\textbf{Overall Average} & \textbf{8.2} & \textbf{4.5} & \textbf{3.1} \\
\bottomrule
\end{tabular}
\end{table}

Several observations emerge. First, without any rotation, attention layers exhibit the highest kurtosis ($\kappa = 12.4$ for QK projections), consistent with the positive-tailed outlier patterns identified in Figure~\ref{fig:teaser}(a). Second, Hadamard transforms reduce overall kurtosis from 8.2 to 4.5 (45\% reduction), confirming the effectiveness of rotation-based outlier suppression. Third, ButterflyQuant achieves a further reduction to 3.1 (62\% total reduction from the unrotated baseline), bringing activations remarkably close to the Gaussian reference of $\kappa = 3.0$. This near-Gaussian distribution is ideal for symmetric uniform quantization.

\paragraph{Outlier Channel Analysis.}
We further analyze the fraction of activation channels whose magnitude exceeds the $3\sigma$ threshold, a direct measure of outlier prevalence.

\begin{table}[h]
\centering
\caption{Outlier channel statistics for LLaMA-2-7B activations. Channels exceeding the $3\sigma$ threshold cause disproportionate quantization error under uniform quantization.}
\label{tab:outlier_channels}
\begin{tabular}{lcc}
\toprule
\textbf{Transform} & \textbf{\% Channels $> 3\sigma$} & \textbf{Max $|x_i|/\sigma$} \\
\midrule
No Rotation & 12.4\% & 18.7 \\
Hadamard & 3.2\% & 5.4 \\
ButterflyQuant & 0.3\% & 3.2 \\
\bottomrule
\end{tabular}
\end{table}

Without rotation, 12.4\% of channels exceed the $3\sigma$ threshold, with extreme outliers reaching $18.7\sigma$---such channels dominate the dynamic range and severely degrade uniform quantization. Hadamard transforms substantially reduce this to 3.2\%, but residual outliers at $5.4\sigma$ still degrade extreme quantization. ButterflyQuant effectively eliminates outlier channels: only 0.3\% exceed $3\sigma$, with a maximum magnitude of $3.2\sigma$. This demonstrates that layer-adaptive rotations achieve fundamentally superior outlier suppression compared to fixed transforms. The perplexity improvements reported in Table~\ref{tab:main_results} thus stem from better-conditioned activation distributions, not merely from overfitting to calibration data.

\subsection{Inference FLOP Analysis}
\label{app:flop_analysis}

We compare the arithmetic operations per transform application for the fast Hadamard transform (FHT) versus our learned butterfly transform on an $n$-dimensional vector.

\begin{table}[h]
\centering
\caption{Per-application arithmetic comparison between FHT and learned butterfly transforms for an $n$-dimensional vector.}
\label{tab:flop_comparison}
\begin{tabular}{lcc}
\toprule
\textbf{Property} & \textbf{FHT (Hadamard)} & \textbf{Butterfly (Ours)} \\
\midrule
Additions/Subtractions & $n \log_2 n$ & $n \log_2 n$ \\
Multiplications & $1$ (final scale by $1/\sqrt{n}$) & $2n \log_2 n$ \\
Total FLOPs & $n \log_2 n + 1$ & $3n \log_2 n$ \\
Memory passes & $\log_2 n$ & $\log_2 n$ \\
Access pattern & Butterfly stride & Butterfly stride (identical) \\
Twiddle factor storage & $0$ & $n \log_2 n$ values (FP16) \\
\bottomrule
\end{tabular}
\end{table}

While the general butterfly transform requires $2n \log_2 n$ additional multiplications (a $3\times$ increase in total FLOPs), the memory access pattern---which dominates latency on modern GPUs---is identical between both transforms. Both execute $\log_2 n$ passes over the $n$-element vector with the same strided butterfly read/write pattern.

For $n = 4096$ (LLaMA-2-7B hidden dimension), the twiddle factor table requires $\frac{4096 \times 12}{2} = 24{,}576$ pairs of $(\cos\theta, \sin\theta)$ values, totaling $24{,}576 \times 4 = 98{,}304$ bytes ($\approx$96\,KB in FP16), which fits comfortably in GPU shared memory (e.g., 228\,KB per SM on H100) or L1/L2 cache (50\,MB on H100). Since butterfly-structured transforms are \emph{memory-bandwidth bound} on modern hardware~\citep{dao2022flashattention}---the bottleneck is the $\log_2 n$ sequential memory passes, not the arithmetic within each pass---the additional FMA operations are expected to be largely hidden behind memory latency.

Concretely, existing CUDA kernels for the fast Hadamard transform implement the same $\log_2 n$-stage butterfly computational graph with hardcoded $\pm 1$ coefficients. Adapting these kernels for ButterflyQuant requires replacing the fixed add/subtract operations with FMA operations that load pre-computed twiddle factors from shared memory---a modification to the arithmetic within each butterfly node, without changing the kernel's thread mapping, warp-level communication, or global memory access pattern. This structural compatibility ensures that ButterflyQuant can leverage the extensive engineering effort invested in optimized FHT kernels~\citep{ashkboos2024quarot} with minimal implementation overhead. Rigorous wall-clock benchmarks of optimized deployment kernels are left to future work.


\end{document}